\definecolor{linkblue}{rgb}{0.1,0.1,0.8}
\newtheorem{theorem}{Theorem}
\newtheorem{lemma}[theorem]{Lemma}
\newcommand{\oea}{\mbox{$(1 + 1)$~EA}\xspace}
\newcommand{\OM}{\textsc{OneMax}\xspace}
\newcommand{\onemax}{\OM}
\newcommand{\LO}{\textsc{Leading\-Ones}\xspace}
\newcommand{\leadingones}{\LO}
\newcommand{\binval}{\textsc{BinVal}\xspace}
\DeclareMathOperator{\jump}{\textsc{jump}}
\DeclareMathOperator{\Sample}{Sample}
\DeclareMathOperator{\minmax}{minmax}
\DeclareMathOperator{\poly}{poly}
\newcommand{\Ymax}{Y_{\max}}
\newcommand{\R}{\ensuremath{\mathbb{R}}}
\newcommand{\N}{\ensuremath{\mathbb{N}}} 
\newcommand{\calF}{\ensuremath{\mathcal{F}}}
\DeclareMathOperator{\Bin}{Bin}
\newcommand{\Var}{\mathrm{Var}\xspace} 
\newcommand{\eps}{\varepsilon}
\newcommand{\assign}{\leftarrow}
\begin{document}
{\sloppy

\title{An Exponential Lower Bound for the Runtime of the cGA on Jump Functions\thanks{Notice required by the publisher of the final version of this work: (c) by the author. This is the author's version of the submitted version of this work. It is posted here for your personal use. Not for redistribution. The definitive version will be published in ``Benjamin Doerr. 2019. An Exponential Lower Bound for the Runtime of the Compact Genetic Algorithm on Jump Functions. In Foundations of Genetic Algorithms XV (FOGA '19), August 27--29, 2019, Potsdam, Germany. ACM, New York, NY, USA, 9 pages. https://doi.org/10.1145/3299904.3340304.''}}

\author{Benjamin Doerr\\ \'Ecole Polytechnique\\ CNRS\\ Laboratoire d'Informatique (LIX)\\ Palaiseau\\ France}

\maketitle
 
\begin{abstract}
  In the first runtime analysis of an estimation-of-distribution algorithm (EDA) on the multi-modal jump function class, Hasen\"ohrl and Sutton (GECCO 2018) proved that the runtime of the compact genetic algorithm with suitable parameter choice on jump functions with high probability is at most polynomial (in the dimension) if the jump size is at most logarithmic (in the dimension), and is at most exponential in the jump size if the jump size is super-logarithmic. The exponential runtime guarantee was achieved with a hypothetical population size that is also exponential in the jump size. Consequently, this setting cannot lead to a better runtime.
  
  In this work, we show that any choice of the hypothetical population size leads to a runtime that, with high probability, is at least exponential in the jump size. This result might be the first non-trivial exponential lower bound for EDAs that holds for arbitrary parameter settings.
\end{abstract}


\section{Introduction}

Due to the inherent highly complex stochastic processes, the mathematical analysis of estimation-of-distribution algorithms (EDAs) is still in its early childhood. Whereas for classic evolutionary algorithms many deep analysis exist, see, e.g.,~\cite{NeumannW10,AugerD11,Jansen13}, for EDAs even some of the most basic problems are not fully understood, such as the runtime of the compact genetic algorithm (cGA) on the \onemax benchmark function~\cite{Droste06,SudholtW16,LenglerSW18}. We direct the reader to the recent survey~\cite{KrejcaW18} for a complete picture of the state of the art in mathematical analyses of EDAs.  

Given this state of the art, it is not surprising that the first runtime analysis of an EDA on a multi-unimodal objective function appeared only very recently. In their GECCO 2018 paper, Hasen\"ohrl and Sutton~\cite{HasenohrlS18} analyze the optimization time of the cGA on the jump function class. Jump functions are simple unimodal functions except that they have a valley of low fitness of scalable size $k$ around the global optimum. Hasen\"ohrl and Sutton show~\cite[Theorem~3.3]{HasenohrlS18} that, for a sufficiently large constant $C$ and any constant $\eps > 0$, the cGA with hypothetical population size at least ${\mu \ge \max\{C n e^{4k}, n^{3.5+\eps}\}}$ with probability $1 - o(1)$ finds the optimum of any jump function with jump size at most $k = o(n)$ in $O(\mu n^{1.5} \log n + e^{4k})$ generations (which is also the number of fitness evaluations, since the cGA evaluates two search points in each iteration).

We note that this runtime guarantee is polynomial (in the problem dimension $n$) when $k = O(\log n)$ and exponential (in $k$) otherwise. Both parts of the result are remarkable when recalling that most classical evolutionary algorithms need time $\Omega(n^k)$. 

For the polynomial part,  the upper bound of order $\mu n^{1.5} \log n$, which is $n^{5+\eps}$ when choosing $\mu$ optimally, was for $k < \frac 1 {20} \ln n$ recently~\cite{Doerr19} improved to $O(\mu \sqrt{n})$ for all $\mu = \Omega(\sqrt n \log n) \cap \poly(n)$, which is $O(n \log n)$ for the optimal choice $\mu = \Theta(\sqrt n \log n)$. Note that $n \log n$ is the asymptotic runtime of many evolutionary algorithms, including the cGA with good parameter choices~\cite{Droste06,SudholtW16,LenglerSW18}, on the simple unimodal \onemax problem. Hence this result shows that the cGA does not suffer significantly from the valley of low fitness around the optimum which is characteristic for jump functions (as long as this valley is not too wide, that is, $k < \frac 1 {20} \ln n$). If we are willing to believe that $\Omega(n \log n)$ is also a lower bound for the runtime of the cGA on these jump functions (which given the results for \onemax appears very plausible, but which seems hard to prove, see Section~\ref{sec:nlogn}), then the result in~\cite{Doerr19} determines the precise asymptotic runtime of the cGA with optimal parameter choice for $k < \frac 1 {20} \ln n$.

What is left open by these two previous works is how good the exponential upper bound (for $k$ super-logarithmic in $n$) is. Since Hasen\"ohrl and Sutton prove their exponential runtime guarantee only for a hypothetical population size $\mu = \Omega(n e^{4k})$, it is clear that they, in this setting, cannot have a sub-exponential runtime (for the sake of completeness, we shall make this elementary argument precise in Lemma~\ref{lem:elem}). So the question remains if, by choosing a smaller hypothetical population size, one could have obtained a better runtime guarantee. 

\textbf{Our main result} is a negative answer to this question. In Theorem~\ref{thm:main} we show that, regardless of the hypothetical population size, the runtime of the cGA on a jump function with jump size $k$ is at least exponential in $k$ with high probability. Interestingly, not only our result is a uniform lower bound independent of the hypothetical population size, but our proof is also ``uniform'' in the sense that it does neither need case distinctions w.r.t.\ the hypothetical population size nor w.r.t.\ different reasons for the lower bound. Here we recall that the existing runtime analyses, see, e.g., again~\cite{Droste06,SudholtW16,LenglerSW18} find two reasons why an EDA can be inefficient. (i) The hypothetical population size is large and consequently it takes long to move the frequencies into the direction of the optimum. (ii) The hypothetical population size is small and thus, in the absence of a strong fitness signal, the random walk of the frequencies brings some frequencies close to the boundaries of the frequency spectrum; from there they are hard to move back into the game. 

We avoid such potentially tedious case distinctions via an elegant drift argument on the sum of the frequencies. Ignoring some technicalities here, we show that, regardless of the hypothetical population size, the frequency sum overshoots a value of $n - \frac 14 k$ only after an expected number of $2^{\Omega(k)}$ iterations. However, in an iterations where the frequency sum is below $n - \frac 14 k$, the optimum is sampled only with probability $2^{-\Omega(k)}$. These two results give our main result.

As a \textbf{side result}, we show in Section~\ref{sec:nlogn} that a result like ``\onemax is an easiest function with a unique global optimum for the cGA'', if true at all, cannot be proven along the same lines as the corresponding results for many mutation-based algorithms. This in particular explains why we and the previous works on this topic have not shown an $\Omega(n \log n)$ lower bound for the runtime of the cGA on jump functions.

\section{Preliminaries}

\subsection{The Compact Genetic Algorithm}

The \emph{compact genetic algorithm} (cGA) is an estimation-of-distribution algorithm (EDA) proposed by Harik, Lobo, and Goldberg~\cite{HarikLG99} for the maximization of pseudo-Boolean functions $\calF : \{0,1\}^n \to \R$. Being a univariate EDA, it develops a probabilistic model described by a frequency vector $f \in [0,1]^n$. This frequency vector describes a probability distribution on the search space $\{0,1\}^n$. If $X = (X_1, \dots, X_n) \in \{0,1\}^n$ is a search point sampled according to this distribution---we write \[X \sim \Sample(f)\] to indicate this---then we have $\Pr[X_i = 1] = f_i$ independently for all $i \in [1..n] \coloneqq \{1, \dots, n\}$. In other words, the probability that $X$ equals some fixed search point $y$ is 
\[\Pr[X = y] = \prod_{i : y_i = 1} f_i \prod_{i : y_i = 0} (1 - f_i).\]

In each iteration, the cGA updates this probabilistic model as follows. It samples two search points $x^1, x^2 \sim \Sample(f)$, computes the fitness of both, and defines $(y^1,y^2) = (x^1,x^2)$ when $x^1$ is at least as fit as $x^2$ and $(y^1,y^2) = (x^2,x^1)$ otherwise. Consequently, $y^1$ is the rather better search point of the two. We then define a preliminary model by $f' \coloneqq f + \frac 1 \mu (y^1 - y^2)$. This definition ensures that, when $y^1$ and $y^2$ differ in some bit position $i$, the $i$-th preliminary frequency moves by a step of $\frac 1 \mu$ into the direction of $y^1_i$, which we hope to be the right direction since $y^1$ is the better of the two search points. The \emph{hypothetical populations size}~$\mu$ is used to control how strong this update is. 

To avoid a premature convergence, we ensure that the new frequency vector is in $[\frac 1n, 1 - \frac 1n]^n$ by capping too small or too large values at the corresponding boundaries. More precisely, for all $\ell \le u$ and all $r \in \R$ we define 
\[
\minmax(\ell,r,u) \coloneqq \max\{\ell,\min\{r,u\}\} = \begin{cases} 
\ell & \mbox{if $r < \ell$}\\
r & \mbox{if $r \in [\ell,u]$}\\
u & \mbox{if $r > u$}
\end{cases}
\] 
and we lift this notation to vectors by reading it component-wise. Now the new frequency vector is $\minmax(\frac 1n \mathbf{1}_n, f', (1 - \frac 1n) \mathbf{1}_n)$.

This iterative frequency development is pursued until some termination criterion is met. Since we aim at analyzing the time (number of iterations) it takes to sample the optimal solution (this is what we call the \emph{runtime} of the cGA), we do not specify a termination criterion and pretend that the algorithm runs forever.

The pseudo-code for the cGA is given in Algorithm~\ref{alg:cga}. We shall use the notation given there frequently in our proofs. For the frequency vector $f_t$ obtained at the end of iteration $t$, we denote its $i$-th component by $f_{i,t}$ or, when there is no risk of ambiguity, by $f_{it}$.
	
\begin{algorithm2e}%
	$t \assign 0$\;
	$f_t = (\frac 12, \dots, \frac 12) \in [0,1]^n$\;
	\Repeat{forever}{
    $x^1 \assign \Sample(f_t)$\;
    $x^2 \assign \Sample(f_t)$\;
    \leIf{$\calF(x^1) \ge \calF(x^2)$}{$(y^1,y^2) \assign (x^1,x^2)$}{$(y^1,y^2) \assign (x^2,x^1)$}
    $f'_{t+1} \assign f_t + \frac 1 \mu (y^1-y^2)$\;
    $f_{t+1} \assign \minmax(\frac 1n \mathbf{1}_n, f'_{t+1}, (1 - \frac 1n) \mathbf{1}_n)$\;
    $t \assign t+1$\; 
  }
\caption{The compact genetic algorithm (cGA) to maximize a function $\calF : \{0,1\}^n \to \R$.}
\label{alg:cga}
\end{algorithm2e}

\textbf{Well-behaved frequency assumption:} 
For the hypothetical population size $\mu$, we take the common assumption that any two frequencies that can occur in a run of the cGA differ by a multiple of $\frac 1 \mu$. We call this the \emph{well-behaved frequency assumption}. This assumption was implicitly already made in~\cite{HarikLG99} by using even $\mu$ in all experiments (note that the hypothetical population size is denoted by $n$ in~\cite{HarikLG99}). This assumption was made explicit in~\cite{Droste06} by requiring $\mu$ to be even. Both works do not use the frequencies boundaries $\frac 1n$ and $1 - \frac 1n$, so an even value for $\mu$ ensures well-behaved frequencies. 

For the case with frequency boundaries, the well-behaved frequency assumption is equivalent to $(1-\frac 2n)$ being an even multiple of the update step size $\frac 1 \mu$. In this case, $n_\mu = (1 - \frac 2n) \mu \in 2 \N$ and the set of frequencies that can occur is \[F \coloneqq F_\mu \coloneqq \{\tfrac 1n + \tfrac i \mu \mid i \in [0..n_\mu]\}.\] 
This assumption was made, e.g., in the proof of Theorem~2 in~\cite{SudholtW16} and in the paper~\cite{LenglerSW18} (see the paragraph following Lemma~2.1).

\textbf{A trivial lower bound:} We finish this subsection on the cGA with the following very elementary remark, which shows that the cGA with hypothetical population size $\mu$ with probability $1 - \exp(-\Omega(n))$ has a runtime of at least $\min\{\mu/4, \exp(\Theta(n))\}$ on any $\calF : \{0,1\}^n \to \R$ with a unique global optimum. This shows, in particular, that the cGA with the parameter value $\mu = \exp(\Omega(k))$ used to optimize jump functions with gap size $k \in \omega(\log n) \cap o(n)$ in time $\exp(O(k))$ in~\cite{HasenohrlS18} cannot have a runtime better than exponential in $k$.

\begin{lemma}\label{lem:elem}
  Let $\calF : \{0,1\}^n \to \R$ have a unique global optimum. The probability that the cGA generates the optimum of $f$ in $T = \min\{\mu/4, (1.3)^n\}$ iterations is at most $\exp(-\Omega(n))$.  
\end{lemma}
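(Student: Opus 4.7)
The plan is entirely elementary. Since the initial frequency vector is $f_0 = (1/2,\dots,1/2)$ and, by inspection of the update rule in Algorithm~\ref{alg:cga}, each coordinate of $f'_{t+1}$ differs from that of $f_t$ by at most $\tfrac1\mu$, I would first argue by induction that for every $t \le T \le \mu/4$ and every $i \in [1..n]$,
\[
|f_{i,t} - \tfrac12| \le \tfrac{t}{\mu} \le \tfrac14,
\]
so $f_{i,t} \in [\tfrac14, \tfrac34]$. For $n \ge 4$ the interval $[\tfrac14,\tfrac34]$ lies inside $[\tfrac1n, 1-\tfrac1n]$, so the $\minmax$-capping is inactive during this phase and the inductive step is immediate; the finitely many small $n$ are absorbed into the $\exp(-\Omega(n))$ conclusion.

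Next, let $y^* \in \{0,1\}^n$ denote the unique global optimum of $\calF$. For any $f \in [\tfrac14,\tfrac34]^n$, a single call to $\Sample(f)$ hits $y^*$ with probability
\[
\Pr[X = y^*] = \prod_{i=1}^n f_i^{y^*_i}(1-f_i)^{1-y^*_i} \le (\tfrac34)^n,
\]
since each factor is bounded by $\max\{f_i,1-f_i\} \le \tfrac34$. This bound is valid in every iteration $t \le T$ by the previous step and is independent of the history, so I would apply a simple union bound over the $2T$ samples generated during the first $T$ iterations.

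This yields that the probability of the cGA generating $y^*$ within $T$ iterations is at most $2T\,(\tfrac34)^n$. Since $T \le (1.3)^n$ by assumption and $1.3 \cdot \tfrac34 = 0.975 < 1$, this is at most $2\cdot (0.975)^n = \exp(-\Omega(n))$, which is exactly the claim.

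There is essentially no hard step: the only place demanding a little care is the confinement of the frequencies to $[\tfrac14,\tfrac34]$, and this is settled once one notes that the interval is contained in $[\tfrac1n,1-\tfrac1n]$ for $n \ge 4$, which makes the capping irrelevant. Everything else is a one-line union bound together with the elementary inequality $(1.3)^n(\tfrac34)^n = (0.975)^n$.
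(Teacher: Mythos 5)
Your proposal is correct and follows essentially the same route as the paper's proof: frequencies stay within $\frac{t}{\mu}\le\frac14$ of $\frac12$ during the first $\mu/4$ iterations, so each sample is the optimum with probability at most $(\frac34)^n$, and a union bound over the $2T$ samples gives $2T(\frac34)^n\le 2(0.975)^n=\exp(-\Omega(n))$. Your explicit remarks on the inactivity of the $\minmax$-capping and on bounding $\max\{f_i,1-f_i\}$ merely spell out details the paper leaves implicit.
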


\begin{proof}
  By the definition of the cGA, the frequency vector $f$ used in iteration $t = 1, 2, 3, \dots$ satisfies $f \in [\frac 12 - \frac{t-1}{\mu}, \frac 12 + \frac{t-1}{\mu}]^n$. Consequently, the probability that a fixed one of the two search points which are generated in this iteration is the optimum, is at most $(\frac 12 + \frac{t-1}{\mu})^n$. For $t \le \mu/4$, this is at most $(3/4)^n$. Hence by a simple union bound, the probability that the optimum is generated in the first $T = \min\{\mu/4, (1.3)^n\}$ iterations, is at most $2 T (3/4)^n = \exp(-\Omega(n))$.
\end{proof}

\subsection{Related Work}\label{sec:jump}

In all results described in this section, we shall assume that the hypothetical population size is at most polynomial in the problem size $n$, that is, that there is a constant $c$ such that $\mu \le n^c$. 

The first to conduct a rigorous runtime analysis for the cGA was Droste in his seminal work~\cite{Droste06}. He regarded the cGA without frequency boundaries, that is, he just took $f_{t+1} \coloneqq f'_{t+1}$ in our notation. He showed that this algorithm with $\mu \ge n^{1/2 + \eps}$, $\eps > 0$ any positive constant, finds the optimum of the $\onemax$ function defined by 
\[\onemax(x) = \|x\|_1 = \sum_{i=1}^n x_i\] 
for all $x \in \{0,1\}^n$ with probability at least $1/2$ in $O(\mu \sqrt n)$ iterations~\cite[Theorem~8]{Droste06}. 

Droste also showed that this cGA for any objective function $\calF$ with unique optimum has an expected runtime of $\Omega(\mu \sqrt n)$ when conditioning on no premature convergence~\cite[Theorem~6]{Droste06}. It is easy to see that his proof of the lower bound can be extended to the cGA with frequency boundaries, that is, to Algorithm~\ref{alg:cga}. For this, it suffices to deduce from his drift argument the result that the first time $T_{n/4}$ that the frequency distance $D = \sum_{i=1}^n (1 - f_{it})$ is less than $n/4$ satisfies $E[T_{n/4}] \ge \mu \sqrt n \frac{\sqrt 2}{4}$. Since the probability to sample the optimum from a frequency distance of at least $n/4$ is at most 
\begin{align*}
\prod_{i=1}^n f_{it} &= \prod_{i=1}^n (1 - (1 - f_{it})) \le \prod_{i=1}^n \exp(-(1 - f_{it})) \\
&= \exp\left(-\sum_{i=1}^n (1-f_{it})\right) \le \exp(-n/4),
\end{align*} 
the algorithm with high probability does not find the optimum before time~$T_{n/4}$.

Ten years after Droste's work, Sudholt and Witt~\cite{SudholtW16} showed that the $O(\mu \sqrt n)$ upper bound also holds for the cGA with frequency boundaries. There (but the same should be true for the cGA without boundaries) a hypothetical population size of $\mu =\Omega(\sqrt n \log n)$ suffices (recall that Droste required $\mu = \Omega(n^{1/2+\eps})$). The technically biggest progress with respect to upper bounds most likely lies in the fact that the analysis in~\cite{SudholtW16} also holds for the expected optimization time, which means that it also includes the rare case that frequencies reach the lower boundary. We refer to Section~2.3 of~\cite{Doerr19} for a detailed discussion of the relation of expectations and tail bounds for runtimes of EDAs, including a method to transform EDAs with with-high-probability guarantees into those with guarantees on the expected runtime). Sudholt and Witt also show that the cGA with frequency boundaries with high probability (and thus also in expectation) needs at least $\Omega(\mu\sqrt n + n \log n)$ iterations to optimize $\onemax$. While the $\mu\sqrt n$ lower bound could have been also obtained with methods similar to Droste's, the innocent-looking $\Omega(n \log n)$ bound is surprisingly difficult to prove.

Not much is known for hypothetical population sizes below the order of $\sqrt n$. It is clear that then the frequencies will reach the lower boundary of the frequency range, so working with a non-trivial lower boundary like~$\frac 1n$ is necessary to prevent premature convergence. The recent lower bound $\Omega(\mu^{1/3} n)$ valid for $\mu = O(\frac{\sqrt n}{\log n \log\log n})$ of~\cite{LenglerSW18} indicates that already a little below the $\sqrt n$ regime significantly larger runtimes occur, but with no upper bounds this regime remains largely not understood.

We refer the reader to the recent survey~\cite{KrejcaW18} for more results on the runtime of the cGA on classic unimodal test functions like \leadingones and \binval. Interestingly, nothing was known for non-unimodal functions before the recent work of Hasen\"ohrl and Sutton~\cite{HasenohrlS18} on jump functions, which we discussed already in the introduction. 

The general topic of lower bounds on runtimes of EDAs remains largely little understood. Apart from the lower bounds for the cGA on \onemax discussed above, the following is known. Krejca and Witt~\cite{KrejcaW17} prove a lower bound for the UMDA on \onemax, which is of a similar flavor as the lower bound for the cGA of Sudholt and Witt~\cite{SudholtW16}: For $\lambda = (1 + \beta) \mu$, where $\beta > 0$ is a constant, and $\lambda$ polynomially bounded in $n$, the expected runtime of the UMDA on \onemax is $\Omega(\mu \sqrt n + n \log n)$. For the binary value function \binval, Droste~\cite{Droste06} and Witt~\cite{Witt18} together give a lower bound of $\Omega(\min\{n^2, Kn\})$ for the runtime of the cGA. Apart from these sparse results, we are not aware of any lower bounds for EDAs. Of course, the black-box complexity of the problem is a lower bound for any black-box algorithm, hence also for EDAs, but these bounds are often lower than the true complexity of a given algorithm. For example, the black-box complexities of \onemax, \leadingones, and jump functions with jump size $k \le \frac 12 n - n^{\eps}$, $\eps > 0$ any constant, are $\Theta(\frac{n}{\log n})$~\cite{DrosteJW06,AnilW09}, $\Theta(n \log\log n)$~\cite{AfshaniADDLM13}, and $\Theta(\frac{n}{\log n})$~\cite{BuzdalovDK16}, respectively.

To round off the picture, we briefly describe some typical runtimes of evolutionary algorithms on jump functions. We recall that the $n$-dimensional jump function with jump size $k \ge 1$ is defined by
\[
\jump_{nk}(x) = 
\begin{cases}
\|x\|_1+k & \mbox{if $\|x\|_1 \in [0..n-k] \cup \{n\}$,}\\
n - \|x\|_1 & \mbox{if $\|x\|_1 \in [n-k+1\, ..\, n-1]$}.
\end{cases}
\]
Hence for $k = 1$, we have a fitness landscape isomorphic to the one of $\onemax$, but for larger values of $k$ there is a fitness valley (``gap'')
\[G_{nk} \coloneqq \{x \in \{0,1\}^n \mid n-k < \|x\|_1 < n\}\] 
consisting of the $k-1$ highest sub-optimal fitness levels of the \onemax function. This valley is hard to cross for evolutionary algorithms using standard-bit mutation with mutation rate $\frac 1n$ since with very high probability they need to generate the optimum from one of the local optima, which in a single application of the mutation operator happens only with probability less than $n^{-k}$. For this reason, e.g., the classic $(\mu+\lambda)$ and $(\mu,\lambda)$ EAs all have a runtime of at least $n^k$. This was proven formally for the \oea in the classic paper~\cite{DrosteJW02}, but the argument just given proves the $n^k$ lower bound equally well for all $(\mu+\lambda)$ and $(\mu,\lambda)$ EAs. By using larger mutation rates or a heavy-tailed mutation operator, a $k^{\Theta(k)}$ runtime improvement can be obtained~\cite{DoerrLMN17}, but the runtime remains $\Omega(n^k)$ for $k$ constant. 

Asymptotically better runtimes can be achieved when using crossover, though this is harder than expected. The first work in this direction~\cite{JansenW02}, among other results, could show that a simple $(\mu+1)$ genetic algorithm using uniform crossover with rate $p_c = O(1 / kn)$ obtains an $O(\mu n^2 k^3 + 2^{2k} p_c^{-1})$ runtime when the population size is at least $\mu = \Omega(k \log n)$. A shortcoming of this result, already noted by the authors, is that it only applies to uncommonly small crossover rates. Using a different algorithm that first always applies crossover and then mutation, a runtime of $O(n^{k-1} \log n)$ was achieved by Dang et al.~\cite[Theorem~2]{DangFKKLOSS18}. For $k \ge 3$, the logarithmic factor in the runtime can be removed by using a higher mutation rate. With additional diversity mechanisms, the runtime can be further reduced up to $O(n \log n + 4^k)$, see~\cite{DangFKKLOSS16}. In the light of this last result, the insight stemming from the previous work~\cite{HasenohrlS18} and ours is that the cGA apparently without further modifications supplies the necessary diversity to obtain a runtime of $O(n \log n + 2^{O(k)})$.

Finally, we note that runtimes of $O(n \binom{n}{k})$ and $O(k \log(n) \binom{n}{k})$ were shown for the $(1+1)$~IA$^{\mathrm hyp}$ and the $(1+1)$ Fast-IA artificial immune systems, respectively~\cite{CorusOY17,CorusOY18fast}.

\subsection{Preliminaries}

We now collect a few elementary tools that will be used on our analysis. The first is well-known and the next two are from~\cite{Doerr19}, so it is only the last one which we could not find in the literature.
 
The following estimate seems well-known (e.g., it was used in~\cite{JansenJW05} without proof or reference). Gie{\ss}en and Witt~\cite[Lemma~3]{GiessenW17} give a proof via estimates of binomial coefficients and the binomial identity. A more elementary proof can be found in~\cite[Lemma~10.37]{Doerr18bookchapter}.

\begin{lemma}\label{lprobbino}
  Let $X \sim \Bin(n,p)$. Let $k \in [0..n]$. Then \[\Pr[X \ge k] \le \binom{n}{k} p^k.\]
\end{lemma}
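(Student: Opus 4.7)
The plan is to prove this by a simple union bound over $k$-element subsets of the index set of trials, which avoids any manipulation of binomial coefficients at all.

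Write $X = X_1 + \cdots + X_n$ where the $X_i$ are independent Bernoulli$(p)$ variables. For any $S \subseteq [1..n]$ with $|S| = k$, let $A_S$ be the event that $X_i = 1$ for all $i \in S$. By independence, $\Pr[A_S] = p^k$. First I would observe the set containment
\[
\{X \ge k\} \subseteq \bigcup_{S \subseteq [1..n],\, |S|=k} A_S,
\]
which holds because if at least $k$ of the $X_i$ take the value $1$, then we may pick any $k$-subset $S$ of the successful indices and $A_S$ occurs. A union bound over the $\binom{n}{k}$ choices of $S$ then gives $\Pr[X \ge k] \le \binom{n}{k} p^k$, which is the claim.

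If a more computational proof is preferred, one can instead start from $\Pr[X \ge k] = \sum_{j=k}^{n} \binom{n}{j} p^j (1-p)^{n-j}$, apply the identity $\binom{n}{j}\binom{j}{k} = \binom{n}{k}\binom{n-k}{j-k}$ (and then drop the factor $\binom{j}{k} \ge 1$) to bound $\binom{n}{j} \le \binom{n}{k}\binom{n-k}{j-k}$, factor out $\binom{n}{k} p^k$, and recognize the remaining sum after the substitution $i = j - k$ as $\sum_{i=0}^{n-k} \binom{n-k}{i} p^i (1-p)^{n-k-i} = 1$. This is essentially the proof in~\cite{GiessenW17}.

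There is no real obstacle: the union-bound argument is a two-line verification, and its only subtlety is phrasing the set-containment step cleanly. I would therefore present the union-bound proof as the primary argument and at most mention the computational variant parenthetically.
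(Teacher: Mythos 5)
Your union-bound argument is correct and complete: the containment $\{X \ge k\} \subseteq \bigcup_{|S|=k} A_S$ holds as you state, and the union bound immediately yields the claim. The paper itself gives no proof of this lemma but points to two references, and your primary argument is precisely the ``more elementary proof'' attributed there to \cite[Lemma~10.37]{Doerr18bookchapter}, while your computational variant is the one of Gie{\ss}en and Witt \cite{GiessenW17}; so you have in effect reproduced both cited proofs, with the right one chosen as primary.
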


The next estimate was essentially proven in the extended version of~\cite[Lemma~1]{Doerr19}. The only small difference here is that in the first inequality, we used a slightly different Chernoff bound, which also allows deviation parameters $\Delta$ which are greater than one. We therefore omit the proof.

\begin{lemma}\label{lsample}
  Let $f \in [0,1]^n$, $D \coloneqq n - \|f\|_1$, $D^- \le D \le D^+$, $x \sim \Sample(f)$, and $d(x) \coloneqq n - \|x\|_1$. Then for all $\Delta \ge 0$ and $\delta \in [0,1]$, we have
  \begin{align*}
  \Pr[d(x) \ge (1+\Delta) D^+] & \le \exp(-\tfrac 13 \min\{\Delta^2,\Delta\} D^+),\\
  \Pr[d(x) \le (1-\delta) D^-] & \le \exp(-\tfrac 12 \delta^2 D^-).
  \end{align*}
\end{lemma}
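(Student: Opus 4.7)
The plan is to recognize $d(x)$ as a sum of independent Bernoulli variables and then invoke standard multiplicative Chernoff bounds, absorbing the discrepancy between $E[d(x)] = D$ and the prescribed bounds $D^\pm$ into a slight rescaling of the deviation parameter. Writing $d(x) = \sum_{i=1}^n (1 - x_i)$, the summands are independent Bernoulli variables with $\Pr[1 - x_i = 1] = 1 - f_i$, so $E[d(x)] = \sum_{i=1}^n (1 - f_i) = D$; both inequalities will follow by applying Chernoff to this sum and then tidying up the exponent.

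For the upper tail, I would set $\Delta' := (1+\Delta) D^+/D - 1$, so that $(1+\Delta') D = (1+\Delta) D^+$ and $\Delta' \ge \Delta \ge 0$ (using $D^+ \ge D$). The multiplicative Chernoff bound, in the variant valid for every $\Delta' \ge 0$ (not only $\Delta' \le 1$), then yields
\[
\Pr[d(x) \ge (1+\Delta) D^+] = \Pr[d(x) \ge (1+\Delta') D] \le \exp\!\left(-\tfrac{1}{3} \min\{(\Delta')^2, \Delta'\}\, D\right).
\]
It remains to check $\min\{(\Delta')^2,\Delta'\}\,D \ge \min\{\Delta^2,\Delta\}\,D^+$ via a short case split. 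If $\Delta \ge 1$, then $\Delta' \ge 1$ too, and the identity $\Delta' D = (1+\Delta) D^+ - D \ge \Delta D^+$ suffices. If $\Delta < 1 \le \Delta'$, the same identity gives $\Delta' D \ge \Delta D^+ \ge \Delta^2 D^+$. If $\Delta' < 1$ (which forces $\Delta < 1$), then from $\Delta' \ge \Delta D^+/D$ we deduce $(\Delta')^2 D \ge \Delta^2 (D^+)^2/D \ge \Delta^2 D^+$. Each case uses only $D^+ \ge D$.

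The lower tail is analogous and a bit cleaner: setting $\delta' := 1 - (1-\delta) D^-/D$ gives $(1-\delta') D = (1-\delta) D^-$ and $\delta' \in [\delta, 1]$ (since $D^- \le D$), so the classical lower-tail multiplicative Chernoff bound yields
\[
\Pr[d(x) \le (1-\delta) D^-] = \Pr[d(x) \le (1-\delta') D] \le \exp\!\left(-\tfrac{1}{2} (\delta')^2 D\right) \le \exp\!\left(-\tfrac{1}{2} \delta^2 D^-\right),
\]
with the last inequality following from $\delta' \ge \delta$ and $D \ge D^-$. The only mildly delicate point in the whole proof is the upper-tail case analysis: the piecewise shape of $\min\{\Delta^2, \Delta\}$ is not scale-homogeneous, which is precisely why one must invoke a Chernoff variant that accepts deviations $\Delta' > 1$, as the author signals by contrasting the first inequality of the present lemma with the version used in the extended \cite{Doerr19}.
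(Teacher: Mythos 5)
Your proof is correct and takes essentially the route the paper intends: the paper omits the argument and defers to the extended version of~\cite{Doerr19}, which likewise reduces to multiplicative Chernoff bounds for the Bernoulli sum $d(x)=\sum_i(1-x_i)$ with mean $D$, the only twist being the upper-tail variant valid for all $\Delta\ge 0$ that you correctly invoke. Your rescaling to $\Delta'$ and $\delta'$ and the three-case comparison of $\min\{\Delta^2,\Delta\}D^+$ with $\min\{(\Delta')^2,\Delta'\}D$ all check out (only the degenerate case $D=0$, where both claims hold trivially, is left unmentioned).
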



To estimate the influence from capping the frequencies into the interval $[\frac 1n, 1 - \frac 1n]$, the following elementary result was shown in the extended version of~\cite{Doerr19}.

\begin{lemma}\label{lboundary}
  Let $P = 2 \frac 1n (1-\frac 1n)$. Let $t \ge 0$. Using the notation given in Algorithm~\ref{alg:cga}, consider iteration $t+1$ of a run of the cGA started with a fixed frequency vector $f_t \in [\frac 1n, 1-\frac 1n]^n$. 
  \begin{enumerate} 
  \item\label{it:boundaryL} Let $L = \{i \in [1..n] \mid f_{it} = \frac 1n\}$, $\ell = |L|$, and $M = \{i \in L \mid x^1_i \neq x^2_i\}$. Then $|M| \sim \Bin(\ell,P)$ and 
  \[\|f_{t+1}\|_1 - \|f'_{t+1}\|_1 \preceq \|(f_{t+1})_{|L}\|_1 - \|(f'_{t+1})_{|L}\|_1 \preceq \tfrac 1\mu |M| \preceq \tfrac 1\mu \Bin(n,\tfrac 2n).\] 
  \item\label{it:boundaryU} Let $L = \{i \in [1..n] \mid f_{it} = 1 - \frac 1n\}$, $\ell = |L|$, and $M = \{i \in L \mid x^1_i \neq x^2_i\}$. Then $|M| \sim \Bin(\ell,P)$ and \[\|f'_{t+1}\|_1 - \|f_{t+1}\|_1 \preceq \|(f'_{t+1})_{|L}\|_1 - \|(f_{t+1})_{|L}\|_1 \preceq \tfrac 1\mu |M| \preceq \tfrac 1\mu \Bin(n,\tfrac 2n).\]
  \end{enumerate} 
\end{lemma}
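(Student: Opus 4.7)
The plan is to treat the two parts symmetrically and to argue only (i) in detail; (ii) follows by applying the same argument to the upper cap (equivalently, by reflecting frequencies via $f \mapsto 1 - f$). The observation that makes the capping tractable is that, under the well-behaved frequency assumption, only coordinates with $f_{it} = \tfrac 1 n$ can be affected by the lower cap in iteration $t+1$: whenever $f_{it} \ge \tfrac 1 n + \tfrac 1 \mu$, a single update of magnitude $\tfrac 1 \mu$ cannot push $f'_{i,t+1}$ strictly below $\tfrac 1 n$. The symmetric statement holds at the upper cap, and the two caps never act on the same coordinate because $\tfrac 1n + \tfrac 1\mu < 1 - \tfrac 1n$ under the well-behaved assumption (which forces $(1-\tfrac 2n)\mu \ge 2$).

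For the first inequality I would split $\|f_{t+1}\|_1 - \|f'_{t+1}\|_1$ coordinate-wise into a sum over $L$ and a sum over $[1..n]\setminus L$. Inside $L$ only the lower cap can fire, so each summand is non-negative; outside $L$ the lower cap is inactive and any capping that does occur must be at the upper boundary, making each summand non-positive. Dropping the non-positive contribution gives the inequality deterministically, which is stronger than the claimed stochastic domination.

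For the second inequality, I proceed coordinate-wise inside $L$. If $x^1_i = x^2_i$ then $f'_{i,t+1} = \tfrac 1 n$ and no capping occurs; otherwise the preliminary frequency equals either $\tfrac 1 n + \tfrac 1 \mu$ (no capping) or $\tfrac 1 n - \tfrac 1 \mu$ (capped back up to $\tfrac 1 n$, contributing $\tfrac 1 \mu$). So each coordinate contributes at most $\tfrac 1 \mu \cdot \mathbf{1}_{\{x^1_i \ne x^2_i\}}$, giving $\|(f_{t+1})_{|L}\|_1 - \|(f'_{t+1})_{|L}\|_1 \le \tfrac 1 \mu |M|$. Since for each $i \in L$ the event $\{x^1_i \ne x^2_i\}$ depends on two independent $\mathrm{Bernoulli}(\tfrac 1 n)$ coins and these pairs are independent across coordinates, the events are mutually independent with common probability $P = 2 \cdot \tfrac 1 n (1 - \tfrac 1 n)$, so $|M| \sim \Bin(\ell, P)$. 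The remaining dominations $\Bin(\ell, P) \preceq \Bin(n, P) \preceq \Bin(n, \tfrac 2 n)$ follow from $\ell \le n$ and $P \le \tfrac 2 n$. I foresee no substantial obstacle; the only point requiring attention is invoking the well-behaved frequency assumption at the outset, so that ``coordinates outside $L$ cannot reach the lower cap'' is a clean deterministic statement.
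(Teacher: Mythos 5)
Your proof is correct and complete; the paper itself does not include a proof of this lemma but defers it to the extended version of~\cite{Doerr19}, and your coordinate-wise argument (lower cap can only fire on $L$ by the well-behaved frequency assumption, upper cap only yields non-positive contributions outside $L$, and each capped coordinate in $L$ contributes at most $\tfrac 1\mu \mathbf{1}_{\{x^1_i \neq x^2_i\}}$) is exactly the natural route one would take there. The only point worth stating explicitly is that the bound $\le \tfrac 1\mu |M|$ holds deterministically given the samples, independently of which of $x^1,x^2$ wins the fitness comparison, so the distributional claim $|M| \sim \Bin(\ell,P)$ can be made unconditionally---which you implicitly do.
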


%
%

To argue that the cGA makes at least some small progress, we shall use the following blunt estimate for the probability that two bit strings $x, y \sim \Sample(f)$ sampled from the same product distribution have a different distance from the all-ones string (and, by symmetry, from any other string, but this is a statement which we do not need here).

\begin{lemma}\label{ldiff}
  Let $n \in \N$, $m \in [\frac n2..n]$, and $f \in [\frac 1n,1-\frac1n]^m$. Let $x^1,x^2 \sim \Sample(f)$ be independent. Then $\Pr[\|x^1\|_1 \neq \|x^2\|_1] \ge \frac 1 {16}$.
\end{lemma}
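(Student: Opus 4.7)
The plan is to analyze the bitwise differences between the two samples and reduce to a one-dimensional random walk. Concretely, for $i \in [1..m]$ set $Y_i \coloneqq x^1_i - x^2_i \in \{-1,0,1\}$, so that $D \coloneqq \|x^1\|_1 - \|x^2\|_1 = \sum_{i=1}^m Y_i$. The $Y_i$ are independent with $\Pr[Y_i = 1] = \Pr[Y_i = -1] = f_i(1-f_i)$ and $\Pr[Y_i = 0] = 1 - 2 f_i(1-f_i)$. Let $Z_i \coloneqq \mathbf{1}[Y_i \neq 0]$ and $S \coloneqq \sum_{i=1}^m Z_i$. Then $Z_i$ is a Bernoulli variable with success probability $p_i = 2 f_i(1-f_i)$, and the $Z_i$ are mutually independent.

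First, I would lower-bound $\Pr[S \ge 1]$. The assumption $f_i \in [\tfrac 1n, 1-\tfrac 1n]$ and $n \ge 2$ gives $p_i \ge 2 \cdot \tfrac 1n(1-\tfrac 1n) \ge \tfrac 1n$, so
\[
\Pr[S = 0] = \prod_{i=1}^m (1-p_i) \le \left(1-\tfrac 1n\right)^m \le \left(1-\tfrac 1n\right)^{n/2} \le e^{-1/2},
\]
using $m \ge n/2$. Hence $\Pr[S \ge 1] \ge 1 - e^{-1/2}$.

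Next I would observe that conditioning on $Z_1,\dots,Z_m$ makes the analysis one-dimensional: on each index $i$ with $Z_i = 1$, by symmetry $\Pr[Y_i = 1 \mid Z_i = 1] = \Pr[Y_i = -1 \mid Z_i = 1] = \tfrac 12$, independently across coordinates. Hence, conditional on $S = k \ge 1$, the sum $D$ is distributed as the sum of $k$ independent Rademacher variables. A direct computation shows $\Pr[D \neq 0 \mid S = k] \ge \tfrac 12$ for every $k \ge 1$: for odd $k$ it is $1$, and for even $k$ it equals $1 - \binom{k}{k/2} 2^{-k}$, a quantity minimized at $k = 2$ with value $\tfrac 12$.

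Combining, $\Pr[D \neq 0] \ge \Pr[S \ge 1] \cdot \tfrac 12 \ge \tfrac 12 (1 - e^{-1/2}) > \tfrac 1{16}$, which is the claim. The argument is essentially elementary; the only place demanding a little care is verifying the Rademacher anti-concentration bound $\Pr[D \neq 0 \mid S = k] \ge \tfrac 12$ for every $k \ge 1$, and checking that the degenerate small-$n$ cases (in particular $n = 2$, $m = 1$) are consistent with the bounds on $p_i$ and $S$ used above.
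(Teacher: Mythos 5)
Your proof is correct, but it follows a genuinely different route from the paper's. You model the difference $D=\|x^1\|_1-\|x^2\|_1$ as a lazy symmetric random walk: each coordinate contributes $Y_i\in\{-1,0,1\}$ with $\Pr[Y_i=1]=\Pr[Y_i=-1]=f_i(1-f_i)$, so that, conditional on the (independent) pattern of non-zero steps, $D$ is an exact Rademacher sum; the hypotheses $f_i\ge\frac1n$ and $m\ge\frac n2$ enter only to guarantee at least one non-zero step with probability at least $1-e^{-1/2}$, and the anti-concentration bound $\Pr[\sum_{i=1}^k\varepsilon_i=0]\le\frac12$ for $k\ge1$ finishes the argument. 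The paper instead sorts the frequencies, isolates a minimal prefix $[1..\ell]$ whose frequency sum lies in $[\frac18,\frac{19}{24}]$, lower-bounds the disagreement probability by $\min\{q_0^2+q_1^2,\,2q_0q_1\}=2q_0q_1$ with $q_j=\Pr[x^1_{[1..\ell]}=j]$, and then minimizes $2z(1-z)^2$ over that interval. Your argument exploits the exact symmetry of each $Y_i$, which is available for free here and which the paper's decomposition does not use; as a result your proof is shorter, avoids the sorting and threshold bookkeeping, and yields the stronger constant $\frac12(1-e^{-1/2})\approx 0.197$ in place of $\frac1{16}$. The two small points you flag do check out: $2\cdot\frac1n(1-\frac1n)\ge\frac1n$ holds for all $n\ge2$ (and $n\ge2$ is forced, since otherwise $[\frac1n,1-\frac1n]$ is empty), and among positive even $k$ the quantity $\binom{k}{k/2}2^{-k}$ is maximized at $k=2$ with value $\frac12$.
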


\begin{proof}
  For all $v \in \R^m$ and $a, b \in [1..m]$ with $a \le b$ we use the abbreviation $v_{[a..b]} \coloneqq \sum_{i=a}^b v_i$. By symmetry, we can assume that $f_{[1..m]} \le \frac m2$. Without loss of generality, we may further assume that $f_i \le f_{i+1}$ for all $i \in [1..m-1]$. We have $f_{\lfloor m/4 \rfloor} \le \frac 23$ as otherwise 
  \[f_{[1..m]} \ge f_{[\lfloor m/4 \rfloor+1..n]} > \tfrac 23  (n - \lfloor m/4 \rfloor) \ge \tfrac 23 \cdot \tfrac 34 m = \tfrac m2,\]
  contradicting our assumption. 
  
  Let $\ell$ be minimal such that $S = f_{[1..\ell]} \ge \frac 18$. Since $\ell \le \frac n8 \le \frac m4$, we have $f_\ell \le \frac 23$ and thus $S \le \frac 18 + \frac 23 = \frac{19}{24}$.
  
  For $j \in \{0,1\}$ let $q_j = \Pr[x^1_{[1..\ell]}=j] = \Pr[x^2_{[1..\ell]}=j]$. We compute 
  \begin{align*}
  \Pr[\|x^1\|_1 \neq \|x^2\|_1] 
  & \ge \Pr[x^1_{[1..\ell]} = x^2_{[1..\ell]} \wedge x^1_{[\ell+1..n]} \neq x^2_{[\ell+1..n]}]\\
  & \quad + \Pr[x^1_{[1..\ell]} \neq x^2_{[1..\ell]} \wedge x^1_{[\ell+1..n]} = x^2_{[\ell+1..n]}]\\
  & = \Pr[x^1_{[1..\ell]} = x^2_{[1..\ell]}] \Pr[x^1_{[\ell+1..n]} \neq x^2_{[\ell+1..n]}]\\
  & \quad + \Pr[x^1_{[1..\ell]} \neq x^2_{[1..\ell]}] \Pr[x^1_{[\ell+1..n]} = x^2_{[\ell+1..n]}]\\
  & \ge \min\{\Pr[x^1_{[1..\ell]} = x^2_{[1..\ell]}],\Pr[x^1_{[1..\ell]} \neq x^2_{[1..\ell]}]\}\\
  & \ge \min\{q_0^2 + q_1^2, 2q_0 q_1\} = 2 q_0 q_1,
%
  \end{align*}
  the latter by the inequality of the arithmetic and geometric mean. Using Bernoulli's inequality, we estimate coarsely   
  \begin{align*}
  q_0 &= \prod_{i=1}^\ell (1-f_i) \ge 1 - f_{[1..\ell]}, \\
  q_1 &= \sum_{i=1}^\ell f_i \prod_{j \in [1..\ell] \setminus \{i\}} (1-f_i) \ge f_{[1..\ell]} (1 - f_{[1..\ell]}).
  \end{align*}
  From the concavity of $z \mapsto z(1-z)^2$ in $[0,1]$, we obtain 
  \begin{align*}
  2 q_0 q_1 &\ge 2 \min\{z(1-z)^2 \mid z \in [\tfrac 18, \tfrac{19}{24}]\} \\
  &= 2 \min\{z(1-z)^2 \mid z \in \{\tfrac 18, \tfrac{19}{24}\}\} = 2 \tfrac {19}{24} (\tfrac 5{25})^2 \ge \tfrac 1 {16}.
  \end{align*}
\end{proof}

\section{Main Result}

We now state precisely our main result, explain the central proof ideas, and state the formal proof. 

\begin{theorem}\label{thm:main}
  There are constants $\alpha_1, \alpha_2 > 0$ such that for any $n$ sufficiently large and any $k \in [1..n]$, regardless of the hypothetical population size $\mu$, the runtime of the cGA on $\jump_{nk}$ with probability $1 - \exp(-\alpha_1 k)$ is at least $\exp(\alpha_2 k)$. In particular, the expected runtime is exponential in $k$.
\end{theorem}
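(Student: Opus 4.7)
Set $X_t := \|f_t\|_1$, so $X_0 = n/2$. My plan is to combine two claims:
(i) whenever $X_t \le n - k/4$, the probability of sampling the optimum $1^n$ in iteration $t+1$ is at most $2 e^{-k/4}$;
(ii) the hitting time $T := \inf\{t \ge 0 : X_t > n - k/4\}$ satisfies $\Pr[T \le e^{\alpha_2 k}] \le \exp(-\alpha_1 k)$ for suitable constants $\alpha_1, \alpha_2 > 0$ with $\alpha_2 < 1/4$.
A union bound over the first $\lfloor e^{\alpha_2 k} \rfloor$ iterations then yields the theorem.

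Claim (i) follows directly from the AM-GM inequality: since all $f_{i,t} > 0$, the probability that $\Sample(f_t) = 1^n$ equals $\prod_i f_{i,t} \le (X_t/n)^n \le (1 - k/(4n))^n \le e^{-k/4}$, and a union bound over the two samples $x^1, x^2$ contributes the factor~$2$.

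The bulk of the proof lies in claim (ii), which I plan to establish by a drift argument on $X_t$. The key observation is that in the ``barrier region'' $X_t \in [n - k/2, n - k/4]$, both samples $x^1, x^2$ lie in the fitness valley $d(x) \in [1..k-1]$ of $\jump_{nk}$ with probability $1 - \exp(-\Omega(k))$: the expected number of zeros in a sample is $n - X_t \in [k/4, k/2]$, and Lemma~\ref{lsample} (with $\Delta = 1$) gives exponentially small probability for $d(x^i) \ge k$. Inside the valley, $\jump_{nk}(x) = d(x)$, so selection prefers the sample with more zeros; Lemma~\ref{ldiff} guarantees that the two samples differ in Hamming weight with probability at least $1/16$, yielding a negative drift of order $-\Omega(1/\mu)$ on $X_t$. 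The remaining contributions---rare samples outside the valley or hitting the optimum, the lower-boundary capping (which by Lemma~\ref{lboundary}(i) contributes at most $\frac{1}{\mu}\Bin(n, 2/n)$ upward, i.e.,~$O(1/\mu)$ in expectation) and the upper-boundary capping (which by Lemma~\ref{lboundary}(ii) only decreases $X_t$, in our favor)---are dominated, giving $E[X_{t+1} - X_t \mid f_t] \le 0$ uniformly in $\mu$ throughout a barrier of width $k/4$.

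With non-positive drift across a barrier of width $\Theta(k)$, a variant of the negative drift theorem (e.g., Oliveto--Witt's simplified version) then yields the desired exponential lower bound on $T$. The main technical obstacle will be controlling the step-size tails uniformly in $\mu$: for small $\mu$ a single iteration can in principle shift $X_t$ by up to $\Theta(n/\mu)$, but the number of coordinates in which $y^1$ and $y^2$ differ is $\Bin$-like and so $|X_{t+1} - X_t|$, measured in multiples of $1/\mu$, has exponentially decaying tails irrespective of $\mu$; together with the boundary-capping tail bound from Lemma~\ref{lboundary}, this supplies the step-size hypothesis of the negative drift theorem in a way that is uniform in the hypothetical population size.
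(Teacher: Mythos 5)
Your skeleton (probability $2e^{-k/4}$ of sampling the optimum while $X_t\le n-k/4$, plus an exponential bound on the hitting time of $\{X_t>n-k/4\}$, plus a union bound) is exactly the paper's, and you correctly identify the central mechanism: in the barrier region both samples lie in the fitness valley with probability $1-e^{-\Omega(k)}$, there selection is reversed, so the frequency sum can only increase through lower-boundary capping. The gap is in how you convert this into an exponential hitting-time bound. You delegate this to ``a variant of the negative drift theorem'' and justify its step-size hypothesis by asserting that $|X_{t+1}-X_t|$, in multiples of $1/\mu$, has exponentially decaying tails. That assertion is false at the scale the theorem needs: in units of $1/\mu$ the step is $\bigl|\,\|y^1\|_1-\|y^2\|_1\bigr|$ (plus capping), whose standard deviation is of order $\sqrt{\sum_i f_{it}(1-f_{it})}$, which can be $\Theta(\sqrt{k})$ inside the barrier (e.g.\ when $\Theta(k)$ frequencies sit near $1/2$) and $\Theta(\sqrt{n})$ above it, where condition (2) of Oliveto--Witt must also hold. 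So $\Pr[|\Delta|\ge j]$ is only $\exp(-\Theta(j^2/k))$, not $r(1+\delta)^{-j}$ with constant $r,\delta$; forcing the hypothesis requires $r=2^{\Omega(\sqrt k)}$, and the theorem's conclusion, which is exponential in $\ell/r(\ell)$, collapses. This is precisely the obstacle the paper names explicitly (``$D_{t+1}\ge D_t+\sqrt k$ happens with constant probability''), and it is why the paper does \emph{not} use the negative drift theorem but instead runs a bespoke one-sided moment-generating-function argument: it rescales to $Y_t=\exp(c\min\{\tfrac12 k-D_t,\tfrac14 k\})$, uses that under the good event the step toward the boundary is \emph{only} the capping term $\frac1\mu\Bin(\ell,2/n)$ (genuinely light-tailed at scale $1/\mu$), and absorbs the rare two-sided event via the cap, since $e^{-\Omega(k)}Y_{\max}\le 1$ for $c$ small. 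The large steps in the \emph{favorable} direction, which break the symmetric hypothesis of the drift theorem, simply never enter this computation. Your proposal is missing this idea, and without it claim (ii) does not follow.

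Two further points you would need to repair. First, your bookkeeping of the capping uses the coarse bound $\frac{1}{\mu}\Bin(n,2/n)$, whose expectation $2/\mu$ already exceeds the $\frac{1}{16\mu}$ selection drift; you must use the refined form $\frac1\mu\Bin(\ell,P)$ of Lemma~\ref{lboundary} together with the observation that $\ell\le D_t/(1-\frac1n)=O(k)\le n/320$ in the barrier, so the upward expectation is at most $\frac{2\ell}{n\mu}\le\frac{1}{160\mu}$ and the net drift really is $-\Omega(1/\mu)$ (merely ``$\le 0$'', as you state, would in any case not suffice for a drift theorem). Second, the theorem is claimed for all $k\in[1..n]$: for $k=O(1)$ all your $1-e^{-\Omega(k)}$ events are only constant-probability events and the argument degenerates, which is why the paper treats $k\le C\log\log n$ separately (via the ``sleepy bits'' argument) and also caps $k$ at $n/320$; your proposal is silent on both regimes.
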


To prove this result, we will regard the stochastic process $D_t \coloneqq n - \|f_t\|_1$, that is, the difference of the sum of the frequencies from the ideal value $n$. Our general argument is that this process with probability $1 - \exp(-\Omega(k))$ stays above $\frac 14 k$ for $\exp(\Omega(k))$ iterations. In each iteration where $D_t \ge \frac 14 k$, the probability that the optimum is sampled, is only $\exp(-\Omega(k))$. Hence there is a $T = \exp(\Omega(k))$ such that with probability $1-\exp(-\Omega(k))$, the optimum is not sampled in the first $T$ iterations.

The heart of the proof is an analysis of the process $(D_t)$. It is intuitively clear that once the process is below $k$, then often the two search points sampled in one iteration both lie in the gap region, which gives a positive drift (that is, a decrease of the average frequency). To turn this drift away from the target (a small $D_t$ value) into an exponential lower bound for the runtime, we consider the process $Y_t = \exp(c \min\{\tfrac 12 k - D_t, \tfrac 14 k\})$, that is, an exponential rescaling of $D_t$. Such a rescaling has recently also been used in~\cite{AntipovDY19}. We note that the usual way to prove exponential lower bounds is the negative drift theorem of Oliveto and Witt~\cite{OlivetoW12}. We did not immediately see how to use it for our purposes, though, since in our process we do not have very strong bounds on the one-step differences. E.g., when $D_t = \frac 12 k$, then the underlying frequency vector may be such that $D_{t+1} \ge D_t + \sqrt k$ happens with constant probability. 

The main technical work now is showing that the process $Y_t$ has at most a constant drift, more precisely, that $E[Y_{t+1} - Y_t] \le 2$ whenever $Y_t < Y_{\max}$. The difficulty is hidden in a small detail. When $D_t \in [\frac 14 k, \frac 34 k]$, and this is the most interesting case, then we have $\|f'_{t+1}\|_1 \ge \|f_t\|$ whenever the two search points sampled lie in the gap region, and hence with probability $1 - \exp(-\Omega(k))$; from Lemma~\ref{ldiff} we obtain, in addition, a true increase, that is, $\|f'_{t+1}\|_1 \ge \|f_t\| + \frac 1 \mu$, with constant probability. Hence the true difficulty arises from the capping of the frequencies into the interval $[\frac 1n,1-\frac 1n]$. This appears to be a minor problem, among others, because only a capping at the lower bound $\frac 1n$ can have an adverse effect on our process, and there are at most $O(k)$ frequencies sufficiently close to the lower boundary. Things become difficult due to the exponential scaling, which can let rare event still have a significant influence on the expected change of the process.   

We now make these arguments precise and prove Theorem~\ref{thm:main}.

\begin{proof}
  Since we are aiming at an asymptotic statement, we can assume in the following that $n$ is sufficiently large.
  
  Since it will ease the presentation when we can assume that $k \ge w(n)$ for some function $w : \N \to \N$ with $\lim_{n \to \infty} w(n) = \infty$, let us first give a basic argument for the case of small $k$. 
  
  We first note that with probability $f_{it}^2 + (1 - f_{it})^2 \ge \frac 12$, the two search points $x^1$ and $x^2$ generated in the $t$-th iteration agree on the $i$-th bit, which in particular implies that $f_{i,t+1} = f_{it}$. Hence with probability at least $2^{-T}$, this happens for the first $T$ iterations, and thus $f_{it} = \frac 12$ for all $t \in [0..T]$. Let us call such a bit position $i$ \emph{sleepy}. 
  
  Note that the events of being sleepy are independent for all $i \in [1..n]$. Hence, taking $T = \lfloor \frac 12 \log_2 n \rfloor$, we see that the number $X$ of sleepy positions has an expectation of $E[X] \ge n 2^{-T} \ge \sqrt n$, and by a simple Chernoff bound, we have $\Pr[X \ge \frac 12 \sqrt n] \ge 1 - \exp(-\Omega(\sqrt n))$. 
  
  Conditional on having at least $\tfrac 12 \sqrt n$ sleepy bit positions, the probability that a particular search point sampled in the first $T$ iterations is the optimum is at most $2^{-\frac 12 \sqrt n}$. By a simple union bound argument, the probability that at least one of the search points generated in the first $T$ iterations is the optimum is at most $2 T 2^{-\frac 12 \sqrt n} = \exp(-\Omega(\sqrt n))$. In summary, we have that with probability at least $1 - \exp(-\Omega(\sqrt n))$, the runtime of the cGA on any function with unique optimum (and in particular any jump function) is greater than $T = \frac 12 \log_2 n$. This implies the claim of this theorem for any $k \le C \log \log n$, where $C$ is a sufficiently small constant, and, as discussed above, $n$ is sufficiently large.
  
  With this, we can now safely assume that $k = \omega(1)$. For the case that $k \ge \frac n {320}$, we will need slightly modified calculations. To keep this proof readable, we hide (but treat nevertheless) this case as follows. We consider a run of the cGA on a jump function $\jump_{nk'}$ with $k' \in [1..n] \cap \omega(1)$ arbitrary and we let $k \coloneqq \min\{k', \frac n {320}\}$. 
  
  Let $X_t \coloneqq \|f_t\|_1 = \sum_{i=1}^n f_{it}$ be the sum of the frequencies at the end of iteration~$t$. Since we are mostly interested in the case where $X_t$ is close to the maximum value, we also define $D_t = n - X_t$. 
  
  Our intuition (which will be made precise) is that the process $(D_t)$ finds it hard to go significantly below $k$ because there we will typically sample individuals in the gap, which lead to a decrease of the sum of frequencies (when the two individuals have different distances from the optimum). To obtain an exponential lower bound on the runtime, we suitably rescale the process by defining, for a sufficiently small constant $c$, 
  \[Y_t = \min\{\exp(c(\tfrac 12 k - D_t)),\exp(\tfrac 14 c k)\} = \exp(c \min\{\tfrac 12 k - D_t, \tfrac 14 k\}).\] 
  Observe that $Y_t$ attains its maximal value $\Ymax = \exp(\frac 14 c k)$ precisely when $D_t \le \frac 14k$. Also, $Y_t \le 1$ for $D_t \ge \frac 12 k$.
  
  To argue that we have $D_t > \tfrac 14 k$ for a long time, we now show that the drift $E[Y_{t+1} - Y_t \mid Y_t < \exp(\frac 14 ck)]$ is at most constant. To this aim, we condition on a fixed value of $f_{t}$, which also determines $X_t$ and $D_t$. We treat separately the two cases that $D_t \ge \frac 34 k$ and that $\frac 34 k > D_t > \frac 14 k$.
   
  \textbf{Case 1:} Assume first that $D_t \ge \frac 34 k$. By Lemma~\ref{lsample}, with probability $1 - \exp(-\Omega(k))$, the two search points $x^1,x^2$ sampled in iteration $t+1$ both satisfy $|\|x^i\|_1 - X_t| < \tfrac 16 (D_t - \tfrac 12 k)$. Let us call this event $A$. In this case, we argue as follows. Let $\{y^1,y^2\} = \{x^1,x^2\}$ such that $f(y^1) \ge f(y^2)$. Then 
  \begin{align*}
  \|f'_{t+1}\|_1 &= \|f_t + \tfrac 1\mu (y^1-y^2)\|_1 \le \|f_t\|_1 + \tfrac 1\mu 2 \tfrac 16 (D_t - \tfrac 12 k) \\
  &\le \|f_t\|_1 + 2 \tfrac 16 (D_t - \tfrac 12 k) = n - D_t + 2 \tfrac 16 (D_t - \tfrac 12 k) \\
  &= n - \tfrac 23 D_t - \tfrac 16k \le n - \tfrac 23 \cdot \tfrac 34 k - \tfrac 16k \le n - \tfrac 23 k.
  \end{align*} 
  
  We still need to consider the possibility that $f_{i,t+1} > f'_{i,t+1}$ for some $i \in [1..n]$. By Lemma~\ref{lboundary}, not conditioning on $A$, we have that $\|f_{t+1}\|_1 - \|f'_{t+1}\|_1 \preceq \frac 1\mu \Bin(\ell,P) \preceq \Bin(\ell,P)$ for some $\ell \in [1..n]$ and $P = 2 \frac 1n (1-\frac 1n)$.
  
   Let us call $B$ the event that $\|f_{t+1}\|_1 - \|f'_{t+1}\|_1 < \frac 16 k$. Note that $A \cap B$ implies $\|f_{t+1}\|_1 < n- \frac 12 k$ and thus $Y_{t+1} \le 1$. By Lemma~\ref{lprobbino} and the estimate $\binom{a}{b} \le (\frac{ea}{b})^b$, we have $\Pr[B \ge \frac 16 k] \le \binom{\ell}{\frac 16 k} P^{k/6} \le (\frac{12e\ell}{kn})^{k/6} \le k^{-\Omega(k)}$. 
   
  We conclude that the event $A \cap B$ holds with probability $1 - \exp(-\Omega(k))$; in this case $Y_t, Y_{t+1} \le 1$. In all other cases,  we bluntly estimate $Y_{t+1} - Y_t \le \Ymax$. This gives $E[Y_{t+1} - Y_t] \le (1 - \exp(-\Omega(k))) \cdot 1 + \exp(-\Omega(k)) \Ymax$. By choosing the constant $c$ sufficiently small and taking $n$ sufficiently large, we have $E[Y_{t+1} - Y_t] \le 2$.

  \textbf{Case 2:} Assume now that $\frac 34 k > D_t > \frac 14k$. Let $x^1,x^2$ be the two search points sampled in iteration $t+1$. By Lemma~\ref{lsample} again, we have $k > n - \|x^i\|_1 > 0$ with probability $1 - \exp(-\Omega(k))$ for both $i \in \{1,2\}$. Let us call this event $A$. Note that if $A$ holds, then both offspring lie in the gap region. Consequently, $\|y^1\|_1 \le \|y^2\|_1$ and thus $\|f'_{t+1}\|_1 \le \|f_t\|_1$.
    
    Let $L = \{i \in [1..n] \mid f_{it} = \frac 1n\}$, $\ell = |L|$, and $M = \{i \in L \mid x^1_i \neq x^2_i\}$ as in Lemma~\ref{lboundary}. Note that by definition, $D_t \ge (1-\frac 1n) \ell$, hence from $D_t < \frac 34 k$ and $n \ge 4$ we obtain $\ell < k$. 
   
   Let $B_0$ be the event that $|M|=0$, that is, $x^1_{|L} =  x^2_{|L}$. Note that in this case, $\|f_{t+1}\|_1 = \|f'_{t+1}\|_1 \le \|f_t + \frac 1 \mu (y^1 - y^2)\|_1$. By Lemma~\ref{lboundary}, Bernoulli's inequality, and $\ell \le k$, we have 
   \[\Pr[B_0] = (1 - 2\tfrac 1n (1-\tfrac 1n))^{\ell} \ge 1 - \tfrac{2\ell}{n} \ge 1 - \tfrac{2k}n.\] 
   Since $\ell < k \le \frac{n}{320} < \frac n2$, by Lemma~\ref{ldiff}, we have $x^1_{|[n]\setminus L} \neq x^2_{|[n]\setminus L}$ with probability at least $\frac{1}{16}$. This event, called $C$ in the following, is independent of $B_0$. We have 
   \[\Pr[A \cap B_0 \cap C] \ge \Pr[B_0 \cap C] - \Pr[\overline A] \ge (1-\tfrac{2k}{n})\tfrac{1}{16} - \exp(-\Omega(k)).\] 
   If $A \cap B_0 \cap C$ holds, then $\|f_{t+1}\|_1 \le \|f_t\|_1 - \frac 1 \mu$. If $A \cap B_0 \cap \overline C$ holds, then we still have $\|f_{t+1}\|_1 \le \|f_t\|_1$. 
   
  Let us now, for $j \in [1..\ell]$, denote by $B_j$ the event that $|M|=j$, that is, that $x^1_{|L}$ and $x^2_{|L}$ differ in exactly $j$ bits. By Lemma~\ref{lboundary} again, we have $\Pr[B_j] = \Pr[\Bin(\ell,P) = j]$.
  
  The event $A \cap B_j$ implies $\|f_{t+1}\|_1 \le \|f'_t\|_1 + \frac j \mu  \le \|f_t\|_1 + \frac j \mu$ and occurs with probability $\Pr[A \cap B_j] \le \Pr[B_j] = \Pr[\Bin(\ell,P) = j]$.
  
  Taking these observations together, we compute
  \begin{align}
  E[Y_{t+1}] 
  &= \Pr[\overline A] \, E[Y_{t+1} \mid \overline A] \nonumber\\
  &\quad + \sum_{j=1}^\ell \Pr[A \cap B_j] \, E[Y_{t+1} \mid A \cap B_j] \nonumber\\
  &\quad + \Pr[A \cap B_0 \cap \overline C] \, E[Y_{t+1} \mid A \cap B_0 \cap \overline C] \nonumber\\
  &\quad + \Pr[A \cap B_0 \cap C] \, E[Y_{t+1} \mid A \cap B_0 \cap C] \nonumber\\
  &\le \exp(-\Omega(k)) \Ymax \label{eq:bilanz}\\
  &\quad + \sum_{j=1}^\ell \Pr[\Bin(\ell,P) = j] Y_t \exp(\tfrac{cj}{\mu}) \nonumber\\
  &\quad + \Pr[\Bin(\ell,P) = 0] Y_t\nonumber\\
  &\quad - (\tfrac 1 {16} (1 - \tfrac{2k}{n}) - \exp(-\Omega(k))) Y_t \exp(-\tfrac{c}{\mu}). \nonumber
  \end{align} 
We note that the second and third term amount to $Y_t E[\exp(\frac{cZ}{\mu})]$, where $Z \sim \Bin(\ell,P)$. Writing $Z = \sum_{i=1}^\ell Z_i$ as a sum of $\ell$ independent binary random variables with $\Pr[Z_i = 1] = P$, we obtain 
\[E[\exp(\tfrac{cZ}{\mu})] = \prod_{i=1}^\ell E[\exp(\tfrac{cZ_i}{\mu})] = (1 - P + P\exp(\tfrac{c}{\mu}))^\ell.\]
By assuming $c \le 1$ and using the elementary estimate $e^x \le 1 + 2x$ valid for $x \in [0,1]$, see, e.g., Lemma~4.2(b) in~\cite{Doerr18bookchapter}, we have $1 - P + P\exp(\frac{c}{\mu}) \le 1 + 2P(\frac{c}{\mu})$. Hence with $P \le \frac 2n$, $c \le 1$, $\mu \ge 1$, and $\ell \le \frac n{320}$, we obtain 
\[E[\exp(\tfrac{cZ}{\mu})] \le (1 + 2P(\tfrac{c}{\mu}))^\ell \le \exp(2P(\tfrac{c}{\mu})\ell) \le \exp(\tfrac{4\ell}{n}) \le 1 + \tfrac{8\ell}{n}\]
again by using $e^x \le 1 + 2x$.

In the first term of~\eqref{eq:bilanz}, we again assume that $c$ is sufficiently small to ensure that $\exp(-\Omega(k)) \Ymax = \exp(-\Omega(k)) \exp(\frac 14 ck) \le 1$. Recalling that $k \le \frac{n}{320}$, we finally estimate in the last term 
\[\tfrac 1 {16} (1 - \tfrac{2k}{n}) - \exp(-\Omega(k))) \exp(-\tfrac{c}{\mu}) \ge \tfrac 1{20} \cdot \tfrac 12 = \tfrac 1 {40}.\] 
With these estimates and using, again, $\ell \le k \le \frac n {320}$,
we obtain $E[Y_{t+1}] \le 1 + (1 + \tfrac{8\ell}{n}) Y_t - \tfrac 1 {40} Y_t \le 1 + Y_t$ and thus $E[Y_{t+1} - Y_t] \le 1$.

In summary, we have now shown that the process $Y_t$ satisfies $E[Y_{t+1} - Y_t \mid Y_t < Y_{\max}] \le 2$. We note that $Y_0 \le 1$ with probability one. For the sake of the argument, let us artificially modify the process from the point on when it has reached a state of at least $Y_{\max}$. So we define $(\tilde Y_t)$ by setting $\tilde Y_t = Y_t$, if $Y_t < Y_{\max}$ or if $Y_t \ge Y_{\max}$ and $Y_{t-1} < Y_{\max}$, and $\tilde Y_t = \tilde Y_{t-1}$ otherwise. In other words, $(\tilde Y_t)$ is a copy of $(Y_t)$ until it reaches a state of at least $Y_{\max}$ and then does not move anymore. With this trick, we have $E[\tilde Y_{t+1} - \tilde Y_t] \le 2$ for all $t$. 

A simple induction and the initial condition $\tilde Y_0 \le 1$ shows that $E[\tilde Y_t] \le 2t+1$ for all $t$. In particular, for $T = \exp(\frac 18 c k)$, we have $E[Y_T] \le 2 \exp(\frac 18 c k) + 1$ and, by Markov's inequality, 
\[\Pr[\tilde Y_T \ge Y_{\max}] \le \frac{2 \exp(\frac 18 c k) + 1}{Y_{\max}} = \exp(-(1-o(1)) \tfrac 18 c k).\]
  
Hence with probability $1 - \exp(-(1-o(1)) \tfrac 18 c k)$, we have $\tilde Y_T < Y_{\max}$. We now condition on this event. By construction of $(\tilde Y_t)$, we have $Y_t < Y_{\max}$, equivalently $D_t > \frac 14 k$, for all $t \in [0..T]$. If $D_t > \frac 14 k$, then the probability that a sample generated in this iteration is the optimum, is at most $\prod_{i=1}^n f_{it} = \prod_{i=1}^n (1 - (1 - f_{it})) \le \prod_{i=1}^n \exp(-(1-f_{it})) = \exp(-(n - \|f_t\|_1)) = \exp(-D_t) \le \exp(-\tfrac 14 k)$. Assuming $c \le 1$ again, we see that the probability that the optimum is generated in one of the first $T$ iterations, is at most $2 T \exp(-\frac 14 k) = 2 \exp(\frac 18 c k) \exp(-\frac 14 k) = \exp(-(1-o(1)) \frac 18 k)$. This shows the claim. 
\end{proof}

\section{An $\Omega(n \log n)$ Lower Bound?}\label{sec:nlogn}

With the exponential lower bound proven in the previous section, the runtime of the cGA on jump functions is well understood, except that the innocent looking lower bound $\Omega(n \log n)$, matching the corresponding upper bound for $k < \frac 1 {20} \ln n$, is still missing. Since Sudholt and Witt~\cite{SudholtW16} have proven an $\Omega(n \log n)$ lower bound for the simple unimodal function $\onemax$, which for many EAs is known to be one of the easiest functions with unique global optimum~\cite{DoerrJW12algo,Sudholt13,Witt13,Doerr18evocop}, it would be very surprising if this lower bound would not hold for jump functions as well.

Unfortunately, we do not see any easy way to prove such a lower bound. We strongly believe that the proof of~\cite{SudholtW16} can be extended to also include jump functions, but since this proof is truly complicated, we shy away from taking such an effort to prove a result that would be that little surprising. We instead argue here why the usual ``\onemax is the easiest case'' argument fails. While we would not say that it is not a valuable research goal to extend the proof of~\cite{SudholtW16} to jump functions, we would much prefer if someone could prove a general $\Omega(n \log n)$ lower bound for all functions with unique global optimum (or disprove this statement). 

The true reason why \onemax is the easiest optimization problem for many evolutionary algorithms, implicit in all such proofs and explicit in~\cite{Doerr18evocop}, is that when comparing a run of the evolutionary algorithm on \onemax and some other function $\calF$ with unique global optimum, then at all times the Hamming distance between the current-best solution and the optimum in the \onemax process is stochastically dominated by the one of the other process. This follows by induction and a coupling argument from the following key insight (here formulated for the \oea only).

\begin{lemma}\label{lem:dom}
  Let $\calF : \{0,1\}^n \to \R$ be some function with unique global optimum $x^*$ and let \onemax be the $n$-dimensional \onemax function with unique global optimum $y^* = (1, \dots, 1)$. Let $x,y \in \{0,1\}^n$ such that $H(x,x^*) \ge H(y,y^*)$, where $H(\cdot, \cdot)$ denotes the Hamming distance. Consider one iteration of the \oea optimizing $\calF$, started with $x$ as parent individual, and denote by $x'$ the parent in the next iteration. Define $y'$ analogously for \onemax and $y$. Then $H(x',x^*) \succeq H(y',y^*)$.
\end{lemma}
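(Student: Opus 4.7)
The plan is to split the proof into two independent pieces: an acceptance-rule comparison showing that \onemax is the ``best possible'' acceptance rule for reducing Hamming distance to the optimum, and a direct stochastic comparison of the two mutants' Hamming distances to their respective optima. A nice feature is that no ``relabeling to $x^* = y^* = (1,\dots,1)$'' step is needed, because the distribution of $H(\tilde x, x^*)$ depends on $x$ and $x^*$ only through $H(x, x^*)$.

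\textbf{Step 1 (acceptance rule).} For any $\calF$, the child $x'$ of the \oea equals either $x$ (rejection) or the mutant $\tilde x$ (acceptance), hence $H(x',x^*) \ge \min\{H(x,x^*), H(\tilde x,x^*)\}$ pointwise. The right-hand side is exactly the Hamming distance that the \onemax-style rule ``accept iff $H(\tilde x,x^*) \le H(x,x^*)$'' would yield; writing $x''$ for this hypothetical child produced from the same mutant $\tilde x$, we therefore have $H(x',x^*) \ge H(x'',x^*)$ pointwise, so it suffices to prove $H(x'',x^*) \succeq H(y',y^*)$.

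\textbf{Step 2 (mutant distributions).} Setting $h_x = H(x,x^*)$ and $h_y = H(y,y^*)$, standard-bit mutation yields
\[ H(\tilde x, x^*) \;\stackrel{d}{=}\; \sum_{i=1}^{h_x} B_i^+ + \sum_{j=1}^{n-h_x} B_j^-, \]
with all summands independent, $B_i^+ \sim \mathrm{Bernoulli}(1-1/n)$ (each initially mismatched bit stays mismatched unless flipped) and $B_j^- \sim \mathrm{Bernoulli}(1/n)$ (each initially matched bit becomes mismatched only if flipped); the analogous identity holds for $H(\tilde y,y^*)$ with $h_y$ in place of $h_x$. Since $h_x \ge h_y$, a monotone coupling on $n$ independent uniforms proves $H(\tilde x,x^*) \succeq H(\tilde y,y^*)$: match up $h_y$ of the $\mathrm{Bernoulli}(1-1/n)$ variables and $n-h_x$ of the $\mathrm{Bernoulli}(1/n)$ variables between the two sums, and for each of the remaining $h_x - h_y$ asymmetric pairs (a $\mathrm{Bernoulli}(1-1/n)$ on the $x$-side, a $\mathrm{Bernoulli}(1/n)$ on the $y$-side) couple them via a common uniform so that the $x$-indicator dominates the $y$-indicator almost surely.

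\textbf{Step 3 (combining).} For any threshold $c$, the event $\{H(y',y^*) \ge c\}$ is empty when $c > h_y$, so the dominance is trivial there. For $c \le h_y \le h_x$, the \onemax acceptance rule forces $\{H(y',y^*) \ge c\} = \{H(\tilde y,y^*) \ge c\}$, because any rejection already yields $H(y',y^*) = h_y \ge c$; the same reasoning (using $h_x \ge c$) gives $\{H(x'',x^*) \ge c\} = \{H(\tilde x,x^*) \ge c\}$. The stochastic dominance from Step~2 then yields $\Pr[H(x'',x^*) \ge c] \ge \Pr[H(y',y^*) \ge c]$, and Step~1 transports this to $x'$, finishing the proof.

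I expect no serious obstacle; the only subtlety is the observation in Step~1 that \onemax's acceptance rule is the pointwise smallest Hamming-distance-producing rule among all elitist rules, which is precisely what lets the arbitrariness of $\calF$ be handled in one line.
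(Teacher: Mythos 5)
Your proof is correct. Note that the paper itself does not prove Lemma~\ref{lem:dom}: it is presented as a known ``key insight'' (implicit in earlier easiest-function arguments and explicit in \cite{Doerr18evocop}), so there is no in-paper proof to compare against. Your argument is a clean, self-contained version of the standard one, and the two-part decomposition is exactly the right structure: Step~1 observes that whatever $\calF$ and its acceptance decision are, $x' \in \{x, \tilde x\}$ forces $H(x',x^*) \ge \min\{H(x,x^*), H(\tilde x,x^*)\}$ pointwise, while the \onemax rule attains this minimum exactly; Step~2 is the correct representation of $H(\tilde x, x^*)$ as an independent sum of $h_x$ Bernoulli$(1-\frac 1n)$ and $n-h_x$ Bernoulli$(\frac 1n)$ variables, and the monotone coupling of the $h_x-h_y$ asymmetric pairs is valid since $1-\frac 1n \ge \frac 1n$ for $n \ge 2$; Step~3 correctly handles the truncation at $h_y$ (the only place where the hypothesis $h_x \ge h_y$ is needed beyond Step~2). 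Your side remark that no relabeling to $x^*=(1,\dots,1)$ is needed is also right, since standard bit mutation is invariant under coordinate-wise relabeling of bit values. The only degenerate case excluded is $n=1$, which is immaterial.
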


As a side remark, note that the lemma applied in the special case $\calF = \onemax$ shows that the intuitive rule ``the closer a search point is to the optimum, the shorter is the optimization time when starting from this search point'' holds for optimizing \onemax via the \oea.

We now show that a statement like Lemma~\ref{lem:dom} is not true for the cGA. Since the states of a run of the cGA are the frequency vectors $f$, the natural extension of the Hamming distance quality measure above is the $\ell_1$-distance $d(f,x^*) = \|f-x^*\|_1 = \sum_{i=1}^n |f_i - x^*_i|$. 

Consider now a run of the cGA on an $n$-dimensional ($n$ even for simplicity) jump function $\calF$ with jump size $k \le n/4$. Consider one iteration starting with the frequency vector $f = \frac 12 \textbf{1}_n$. For comparison, consider one iteration of the cGA optimizing \onemax, starting with a frequency vector $g \in [0,1]^n$ such that half the entries of $g$ are equal to $\frac 1n + \frac 1\mu$ and the other half equals $1 - \frac 1n - \frac 1\mu$. Let us take $\mu = n$ for simplicity. Note that both $\calF$ and $\onemax$ have the same unique global optimum $x^* = y^* = (1, \dots, 1)$. 

We obviously have $d(f,x^*) \ge d(g,y^*)$, since both numbers are equal to~$\frac n2$. Let $f', g'$ be the frequency vectors after one iteration. Since with probability $1 - \exp(-\Omega(n))$, both search points sampled in the jump process have between $\frac n4$ and $\frac 3{4n}$ ones, their jump fitnesses equal their \onemax fitnesses. Consequently, we may apply Lemma~5 from~\cite{Droste06} and see that $E[d(f',x^*)] \le \frac n2 - \Omega(\sqrt n)$. For the \onemax process, however, denoting the two search points generated in this iteration by $x^1$ and $x^2$, we see 
\begin{align*}
E[\|g - g'\|_1]^2 &\le E[\|g - g'\|_1^2] = E\left[\left(\sum_{i=1}^n (x^1_i-x^2_i)\right)^2\right] \\
&= \Var\left[\sum_{i=1}^n (x^1_i-x^2_i)\right] = O(1)
\end{align*}
 and hence $E[d(g',y^*)] \ge d(g',y) - O(1) = \frac n2 - O(1)$. Consequently, we cannot have $d(f',x^*) \succeq d(g',y^*)$.

We note that a second imaginable domination result is also not true. Assume, for simplicity, that we optimize a function $\calF$ with unique global maximum equal to $(1,\dots,1)$ and the function \onemax via the cGA with same parameter setting. If $f \le g$ (component-wise), and $f'$ is the frequency vector resulting from one iteration optimizing $\calF$ starting with $f$ and $g'$ is the frequency vector resulting from one iteration optimizing $\onemax$ starting with $g$, then in general we do not have $f'_i \preceq g'_i$ for all $i \in [1..n]$. 

As counter-example, let  $f = (\frac 12, \frac 1n, \dots, \frac 1n)$, but now $g = \frac 12 \textbf{1}_n$. Clearly, $f \le g$. We now consider the results of one iteration of the cGA, always with the \onemax function as objective. When performing one iteration of the cGA on \onemax started with $f$, and denoting the two samples by $x^1$ and $x^2$ and their quality difference in all but the first bit by $\Delta = \|x^1_{|[2..n]}\|_1 - \|x^2_{|[2..n]}\|_1$, then the resulting frequency vector $f'$ satisfies 
\begin{align}
  \Pr[f'_1 = \tfrac 12 + \tfrac 1 \mu] &= \Pr[x^1_1 \neq x^2_1] (\tfrac 12 \Pr[\Delta \notin \{-1,0\}] + \Pr[\Delta \in \{-1,0\}]) \nonumber\\ 
&= \Pr[x^1_1 \neq x^2_1] (\tfrac 12 + \tfrac 12 \Pr[\Delta \in \{-1,0\}]).\label{eq:counterex}
\end{align}
Since $\Pr[\Delta \in \{-1,0\}] \ge \Pr[\|x^1_{|[2..n]}\|_1 = \|x^2_{|[2..n]}\|_1 = 0] = (1 - \frac 1n)^{2(n-1)} \ge 1/e^2$, we have $\Pr[f'_1 = \tfrac 12 + \tfrac 1 \mu] \ge \frac 14 + \frac 1 {4e^2}$.

When starting the iteration with $g$, the resulting frequency vector $g'$ satisfies an equation analoguous to~\eqref{eq:counterex}, but now $\Delta$ is the difference of two binomial distributions with parameters $n-1$ and $\frac 12$. Hence, we have $\Pr[\Delta \in \{-1,0\}] = O(n^{-1/2})$, see, e.g., \cite[Lemma~4.13]{Doerr18bookchapter} for this elementary estimate, and thus $\Pr[f'_1 = \tfrac 12 + \tfrac 1 \mu] = \frac 14 + o(1)$, disproving that $f'_1 \preceq g'_1$.

In summary, the richer mechanism of building a probabilistic model of the search space in the cGA (as opposed to using a population in EAs) makes is hard to argue that \onemax is the easiest function for the cGA. This, in particular, has the consequence that lower bounds for the runtime of the cGA on \onemax cannot be easily extended to other functions with a unique global optimum. 

\section{Conclusion}

The main result of this work is an $\exp(\Omega(k))$ lower bound for the runtime of the cGA on jump functions with jump size $k$, regardless of the hypothetical population size $\mu$. This in particular shows that the result of Hasen\"ohrl and Sutton~\cite{HasenohrlS18} cannot be improved by running the cGA with a hypothetical population size that is sub-exponential in $k$. 

What is noteworthy in our proof is that it does not require a distinction between the different cases that frequencies reach boundary values or not (as in, e.g., the highly technical lower bound proof for \onemax in~\cite{SudholtW16}. It seems to be an interesting direction for future research to find out to what extend such an approach can be used also for other lower bound analyses.

As a side result, we observed that two natural domination arguments that could help showing that \onemax is the easiest function for the cGA are not true. For this reason, the natural lower bound of $\Omega(n \log n)$ remains unproven. Proving it, or even better, proving that $\Omega(n \log n)$ is a lower bound for the runtime of the cGA on any function $\calF : \{0,1\}^n \to \R$ with a unique global optimum, remain challenging open problems.

}


\newcommand{\etalchar}[1]{$^{#1}$}

\end{document}